\newtheorem{definition}{\bf Definition}
\newtheorem{lemma}{\bf Lemma}
\newtheorem{proposition}{\bf Proposition}
\DeclareMathOperator*{\argmin}{arg\,min}
\title{\LARGE \bf
Towards a Framework for Tracking Multiple Targets: Hybrid Systems meets Computational Geometry 
}
\author{Guillermo J. Laguna, Rui Zou and Sourabh Bhattacharya% <-this % stops a space
\thanks{Guillermo J. Laguna, Rui Zou and Sourabh Bhattacharya are with the Department of Mechanical Engineering,
        Iowa State University, Ames, Iowa 50011, USA
        {\tt\small \{gjlaguna, rzou, sbhattac\}@iastate.edu}}%
}
\begin{document}

\maketitle
\thispagestyle{empty}
\pagestyle{empty}

%%%%%%%%%%%%%%%%%%%%%%%%%%%%%%%%%%%%%%%%%%%%%%%%%%%%%%%%%%%%%%%%%%%%%%%%%%%%%%%%
\begin{abstract}

We investigate a variation of the art gallery problem in which a team of mobile guards tries to track an unpredictable intruder in a simply-connected polygonal environment. In this work, we use the deployment strategy for diagonal guards originally proposed in \cite{O'Rourke:1983}. The guards are confined to move along the diagonals of a polygon and the intruder can move freely within the environment. We define critical regions to generate event-triggered strategies for the guards. We design a hybrid automaton based on the critical regions to model the tracking problem. Based on reachability analysis, we provide necessary and sufficient conditions for tracking in terms of the maximal controlled invariant set of the hybrid system. We express these conditions in terms of the critical curves to find sufficient conditions for $\lfloor \frac{n}{4} \rfloor$ guards to track the mobile intruder using the reachability analysis.

\end{abstract}

%%%%%%%%%%%%%%%%%%%%%%%%%%%%%%%%%%%%%%%%%%%%%%%%%%%%%%%%%%%%%%%%%%%%%%%%%%%%%%%%
\section{Introduction}

\label{sec:intro}

Target tracking refers to the problem of planning the path of a mobile observer that should keep a mobile target within its sensing range. It arises in numerous applications involving monitoring and surveillance \cite{Li:1999,Briggs:1996,Hsu:2008}. The general visibility-based target tracking problem is the following; a team of autonomous mobile agents equipped with vision sensors is deployed as observers to track a team of mobile targets in the environment. To achieve this, the agents need to coordinate among themselves to ensure successful tracking. In such scenarios, an important question that arises for a network designer is the following: What is the minimum number of robots that need to be deployed in order to successfully perform the tracking task? In this work, we leverage results from art-gallery problems \cite{O'Rourke:1983} to explore the aforementioned problem.  

The art gallery problem is a well-studied visibility problem in computational geometry. A simple solution to the tracking problem in bounded environments is to cover the polygon representing the environment with sufficient number of observers. However, it has been shown that the problem of computing the minimum number of guards required to cover a simply connected polygon is NP-hard \cite{Lee_complexity}. Since covering a polygon is a specific instance of the tracking problem (an instance in which the intruder is infinitely fast), the problem of finding the minimum number of guards to track an intruder is at least as hard as the problem of finding the minimum number of guards for covering the polygon. Therefore, we try to find a reasonable upper bound on the minimum number of guards required to track a mobile intruder. In \cite{O'Rourke:1987}, it is shown that $\lfloor\frac{n}{3}\rfloor$ static guards with omnidirectional field-of-view is sufficient to cover the entire polygon.  In the case of mobile guards, it is shown that $\lfloor\frac{n}{4}\rfloor$ guards are sufficient to ensure that every point inside the polygon is visible to at least one guard, and \cite{O'Rourke:1983} provides an algorithm to deploy them. It implies that if these guards are allowed to move at infinite speed, $\lfloor\frac{n}{4}\rfloor$ of them are sufficient to track the intruder. In this work, we propose strategies for bounded speed guards to maintain a line-of-sight with an intruder. 

Numerous techniques have been proposed to obtain the minimum number of guards required for the classical art gallery problem. In \cite{Ghosh2010}, an approximation algorithm is proposed to provide a solution to the minimum number of vertex and edge guard. This solution is within $O(\log n)$ times of the optimal one. Heuristic techniques based on greedy algorithms and polygon partition are introduced in \cite{Amit2010}. Researchers have also investigated variations of the classical art gallery problem for specific environments. In \cite{Hoffmann:1990}, the authors show that the sufficient condition of $\lfloor \frac{n}{4} \rfloor$ mobile guards are sufficient to cover a rectilinear polygon with holes.

There have been some efforts to deploy multiple observers to track multiple targets. \cite{parker2002distributed} presents a method of tracking several evaders with multiple pursuers in an uncluttered environment. In \cite{jung2002tracking}, the problem of tracking multiple targets is addressed using a network of communicating robots and stationary sensors. In \cite{zhang2016multi}, the authors propose the idea of {\it pursuit fields} for a team of observers to track multiple evaders in an environment containing obstacles. In \cite{zou2015visibility}, we addressed the problem of decentralized visibility-based target tracking for a team of mobile observers trying to track a team of mobile targets. The aforementioned works focus on investigating motion strategies for free guards to track multiple intruders in the environment. In contradistinction, this work deals with a scenario in which the paths of the guards are prespecified and the problem is to explore reactive strategies for the guards to construct trajectories that can track an unpredictable intruder.  

In this work, we use the theory of hybrid systems to provide our tracking guarantees. The area of hybrid systems is defined as the study of systems that involve the interaction of discrete event and continuous time dynamics, with the purpose of proving properties such as reachability and stability \cite{Tomlin:2000},\cite{Ding:2011}. Hybrid systems are characterized by continuous systems with a mode-based operation, where the different modes correspond to different continuous dynamics. One of the most commonly used hybrid system models is an hybrid automaton. It combines state-transition diagrams for discrete behavior with differential equations for continuous behavior. In \cite{Ding:2011}, a methodology for computing reachable sets for hybrid systems is presented. In this work we model the tracking problem as an hybrid automaton and perform a reachability analysis from it to obtain tracking guarantees.

Some fundamental concepts from the area of hybrid systems were introduced in \cite{Lygeros:2012}. There are many research directions regarding hybrid systems. In \cite{Mitchell:2002}, it is demonstrated that reachability algorithms using level set methods and based on the Hamilton-Jacobi PDE can be extended to hybrid systems whose dynamics are described by differential algebraic equations. For the problem of UAV traffic management problem, it is required to ensure that the safety requeriments of large platoons of UAVs flying simultaneously are met. In the past, hybrid systems have been used to model pursuit-evasion games. In \cite{Isler:2004}, a provable solution for visibility-based pursuit-evasion games in simply-connected environments is presented. It considers the optimal control solution for the differential pursuit-evasion game and also the discrete pursuit-evasion game on the graph representing the environment. In \cite{Li:2012}, a pursuit-evasion game in which UAVs must follow RF emitters is considered. Since the RF emitter can take both continuous and discrete actions. The properties of the game are studied as an hybrid system and optimal strategies were derived for both parties. In this work, the environment is partitioned in different regions, and for each one of them there a different set of guards moves to track the intruder. Each region is associated with a discrete state, and since the guards have continuous dynamics, there are also continuous states in the system. We perform a reachability analysis to determine if it is possible to reach states for which tracking is not guaranteed. The contributions of this work are as follows:

\begin{enumerate}
\item Given a triangulation of a polygon, we present a classification of the triangles based on the coverage provided by the guards, and a classification of the guards based on the type of triangles being covered by them.
\item We present an hybrid automaton that can be used to model the tracking problem addressed in this paper and present a reachability analysis to determine if the tracking of the intruder can be ensured.   
\item An upper bound on the ratio between the maximum speeds of the intruders and the guards to ensure persistent tracking is presented.
\item We present sufficient conditions for $\lfloor \frac{n}{4} \rfloor$ guards to track the intruders using the proposed strategy.
\end{enumerate}

This paper is organized as follows. In Section \ref{sec:problem}, we present the problem formulation. In Section \ref{sec:procedure}, we define different types of guards and regions that determine the reactive strategy of the guards. In Section \ref{sec:overall}, the tracking problem is modeled as an hybrid system, so we can use a reachability analysis in Section \ref{sec:reachability} to determine if the intruder can be tracked all the time. In Section \ref{sec:speed}, we obtain an upper bound for the speed ratio which ensures that the proposed strategy works. We conclude in Section \ref{sec:conclusion} with some future work.

\section{Problem Statement}
\label{sec:problem}

Given a simply connected polygonal environment $P$, we consider the triangulation of $P$ which is its partition into a set of disjoint triangles such that the vertices of the triangles are vertices of $P$. The diagonals of the triangulation are line segments that connect any two vertices of the triangulation. Let $G$ be the graph that represents the triangulated polygon such that $V(G)$ (vertex set of $G$) corresponds to the vertices of $P$, and $E(G)$ (edge set of $G$) corresponds to the diagonals of the triangulation of $P$.

We consider a team of mobile guards $S_g$ and an unpredictable intruder $I$ in the environment. Initially, $ \lfloor\frac{n}{4}\rfloor$ diagonal guards are placed based on the deployment proposed in \cite{O'Rourke:1983}. Let $g_i$ denote guard $i$ with $i \in \mathbb{N}$, confined to a diagonal $h_i$. Endpoints of $h_i$ are denoted by $v_j(i)$ with $j \in \{1,2\}$. The intruder and the guards can move or stay motionless. When the intruder moves it has a constant speeds $v_e$. When a guard moves, it has a constant speed $v_g$ (all guards have the same speed). Given the initial position of the intruder $p_I(0)$, and the speed ratio $r=\frac{v_e}{v_g}$, our objective is to find coordination and tracking strategies for the guards to ensure that the intruder is visible to at least one guard at all times. We present some important definitions that are used throughout the paper. $l_i$ is the length of $h_i$, and $d_M^i=l_i r$ is the ``Maximum'' distance that the intruder can travel while $g_i$ moves across $h_i$. We also classify the triangles in the triangulation of the environment. Refer to Figure \ref{fig:triangles}. In all figures, red segments illustrate the diagonals associated to guards, and the light gray region is the exterior of $P$.

\label{subsec:def}
\begin{definition}
\label{def:1} \hfill
\begin{enumerate}
\item Safe Triangle: A triangle is called \textit{safe} if it can be covered at any time the intruder enters it\footnote{We say that a triangle is ``covered'' when there is at least one guard at its boundary.}. They are shaded pale blue in Figure \ref{fig:triangles}.
\item Unsafe Triangle: A triangle is called \textit{unsafe} if there is only one guard that can cover it from one of its endpoints. They are shaded orange in Figure \ref{fig:triangles}.
\item Regular Triangle: A triangle is called \textit{regular} if it is neither safe nor unsafe. They are shaded white in Figure \ref{fig:triangles}.
\item Safe Zone: A \textit{safe zone} of a guard $g_i$, denoted by $A(i)$, is the set of triangles for which $h_i$ is an edge (in later figures, each diagonal will be labeled as $g_i$ instead of $h_i$). Since $g_i$ is always at the boundary of the triangles in $A(i)$, it is clear that all triangles in $A(i)$ are safe triangles.
\item Unsafe Zone: An \textit{unsafe zone} of an endpoint $v_j(i)$ is the set of unsafe triangles incident\footnote{We say that a triangle is incident to an endpoint if the endpoint is a vertex of the triangle.} to $v_j(i)$ and it is denoted by $U_j(i)$.
\end{enumerate}
\end{definition}
%\footnote{We say that two triangles are adjacent if they have an edge in common.} to $h_i$ 
\begin{figure}[htb]%scale=0.45 2cm,height=10cm,keepaspectratio
	\begin{center} 
		\includegraphics[width=0.55\linewidth,height=0.34\linewidth]{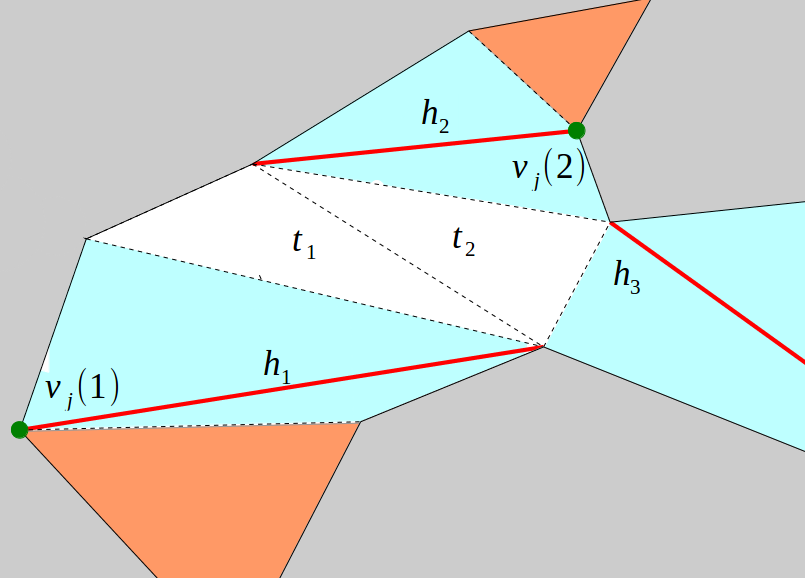}
	\end{center}
	\caption{Classification of the triangles obtained from the triangulation of $P$. 
	}
	\label{fig:triangles}
\end{figure}
\begin{definition}{Neighboring guard:}
	\label{def:2}
	Let $T_j(i)$ be the set of triangles incident to endpoint $v_j(i)$. We say that the set of triangles $T(i) = T_1(i) \cup T_2(i)$ is ``incident'' to $h_i$. A guard $g_k \in S_g\backslash \{g_i\}$ is said to be a neighbor of $g_i$ if $T(i) \cap T(k) \neq \emptyset$.
\end{definition}

\section{Description and Identification of the Guards}
\label{sec:procedure}

In this section we classify the guards based on the type of triangles defined in Section \ref{sec:intro} that are incident to the diagonals of the guards. We also introduce the notion of {\it critical curves} of a guard, which act as a trigger for the guard to implement its reactive strategy when an intruder crosses them. We use the critical curves to define a \textit{critical region} of a guard which determines its location along its diagonal as a function of the location of the intruder when it is inside the critical region. Although the construction of critical region differs for each type of guard, the reactive strategy follows the same idea.

\subsection{Type $0$ Guards}
\label{subsec:zero}
\begin{figure}[thpb]%scale=0.5
	\begin{center} 
		\includegraphics[width=0.4\linewidth,height=0.32\linewidth]{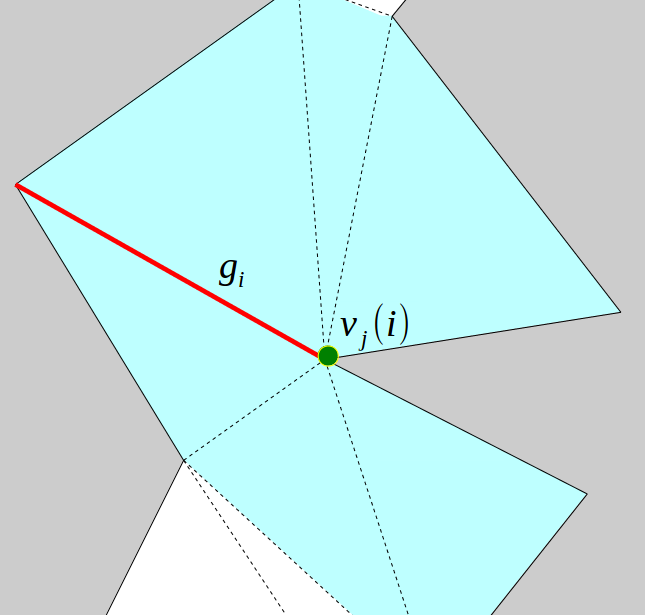} %totalheight=0.23\textheight
	\end{center}
	\caption{Since all the triangles in $T(i)$ are safe when $g_i$ is located at $v_j(i)$, it is a type $0$ guard.}
	\label{fig:type0}
 \end{figure}
%  \begin{definition}{Type $0$ guard:}
%  	\label{def:6}
\noindent
A guard $g_i \in S_g$ is of type $0$ if there exists at least one value of $j\in\{1,2\}$ such that for each triangle $t_n\in T_j(i) \backslash A(i)$, $t_n$ is always covered by a guard when $g_i$ is located at the opposite endpoint $v_k(i)$. Type $0$ guards are static. If all the triangles in $T(i)$ are safe triangles, then $g_i$ can be stationed at any point on $h_i$. However, for $j,k\in \{1,2\}$, if all the triangles in $T_j(i)$ are safe triangles, and at least one triangle in $T_k(i)$ is not safe, where $k \neq j$, the guard must remain static at $v_k(i)$. A type $0$ guard is shown in Figure \ref{fig:type0}. It is located at $v_j(i)$ where it can cover all the triangles in $T(i)$.

\subsection{Type $1$ Guards}
\label{subsec:one}

 A guard $g_i$ (not of type $0$) is of type $1$ if it has an unsafe zone $U_j(i)$, and there are no regular triangles in $T_j(i)$ adjacent\footnote{We say that two triangles are adjacent if they have an edge in common.} to any triangle in $B(i)$, where $B(i)$ is the \textit{augmented safe zone} of $g_i$. $B(i)$ consists of all the triangles in $A(i)$ and all safe triangles in $T(i)$ adjacent to $A(i)$. We define an internal critical curve of a type $1$ guard as follows: $s_{int}^j(i)$ is the internal critical curve of guard $g_i$ at endpoint $v_j(i)$. It is the boundary between $U_j(i)$ and $B(i)$. In general, $s_{int}^j(i)$ consists of two connected line segments. We define $R= \bigcup_{t_n \in U_j(i)} t_n$ as a region of the environment that can only be covered by $g_i$. $s_{int}^j(i)$ partitions $P$ into two regions, one of them containing $R$, it is called $P_R$. The external critical curve, $s_{ext}^j(i)$, is a curve inside $P \backslash P_R$ which is at a constant distance of $d_M^i$ from $s_{int}^j(i)$. In general $s_{ext}^j(i)$ consists of a set of connected line segments and arcs of circle. The critical region $C_j(i)$ is defined as the region enclosed by $s_{int}^{j}(i)$ and $s_{ext}^{j}(i)$. A type $1$ guard is illustrated in Figure \ref{fig:type1}. The two connected blue segments in the figure represent $s_{int}^j(i)$ , and the connected set of black arcs represents $s_{ext}^j(i)$. The importance of a critical region is that we can determine the location of $g_i$ along its diagonal based on the location of $I$ when it is inside the critical region, such that when $I \in s_{int}^j(i)$, $g_i$ is located at $v_j(i)$, and when $I \in s_{ext}^j(i)$, $g_i$ is located at $v_k(i)$ with $k\neq j \in \{ 1, 2 \}$. This is detailed in Subsection \ref{subsec:reactive}.

\begin{figure}[htb]%scale=0.43
	\begin{center} 
		\includegraphics[width=0.6\linewidth,height=0.32\linewidth]{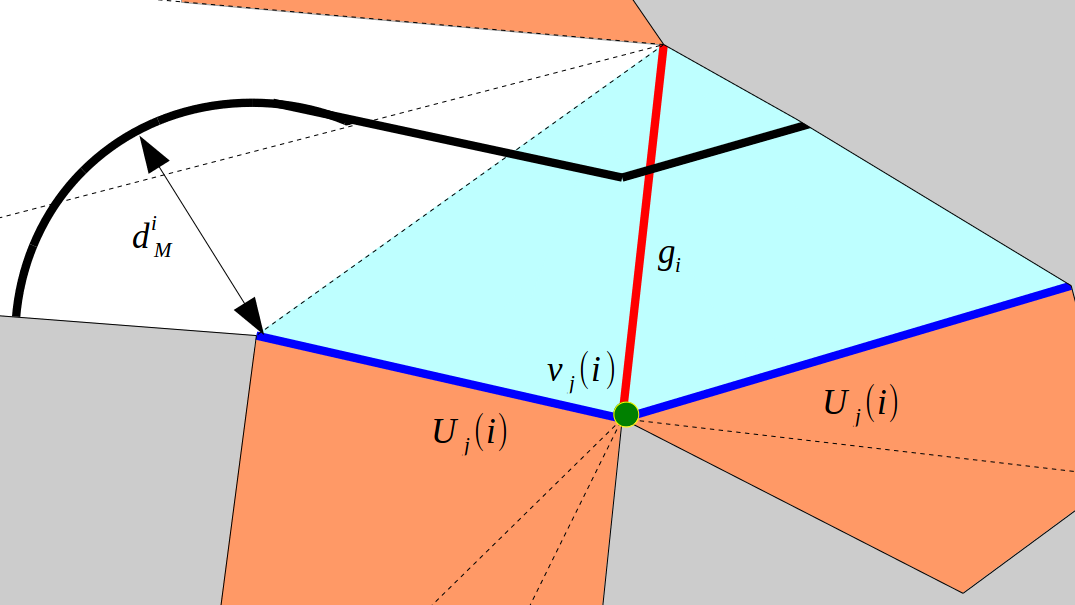} %totalheight=0.16\textheight
	\end{center}
	\caption{In this case $B(i)=A(i)$. Since there are no regular triangles in $T_j(i)$ adjacent to $B(i)$, $g_i$ is type $1$. Its critical region is enclosed by the black curve and blue segments.}
	\label{fig:type1}
\end{figure}

\subsection{Type $2$ Guards}
\label{subsec:two}

A guard $g_i \in S_g$ (not of type $0$ or type $1$) is of type $2$ if all the neighboring guards that can cover the regular triangles in $T_1(i)$ or $T_2(i)$ have their critical regions defined. Only type $1$ and type $2$ guards have critical regions. Hence, to identify a type $2$ guard all of its neighbors must be type $1$ guards or type $2$ guards with their regions already defined. To define the critical region of a type $2$ guard we start defining $R_j(i) \subset T_j(i)$ as the set of regular triangles incident to $v_j(i)$, and we also define $N_j(i)$ as the set of all neighboring guards that cover the triangles in $R_j(i)$.

For each $t_n \in R_j(i)$, we define $S_{t_n} \subset N_j(i)$ as the set of guards that can cover $t_n$. For each guard $g_l \in S_{t_n}$, we know that if $p_I \in C_j(l)$, then $p_{g_l} \notin v_j(l)$. Let $B_n = t_n \cap (\bigcap_{g_l \in S_{t_n}}{C_j(l)})$. If $B_n \neq \emptyset$ and if $p_I \in B_n$, then $g_i$ is the only guard that can cover $B_n$. We define a curve $s_{int}^{j,n}(i)$ as the boundary of each $B_n$. There might be cases in which some of the regions $B_n$ are adjacent. If $B_n$ is not adjacent to any $B_m$ with $m \neq n$, then we define $B_w^j=B_n$. Otherwise, let $S_B$ be the region obtained from the union of all adjacent regions $B_n$, then $B_w^j= S_B$. After obtaining all $B_w^j$, a curve $s_{int}^{j,w}(i)$ is defined as the boundary of each $B_w^j$. Additionally, if $g_i$ has an unsafe zone $U_j(i)$, an additional $s_{int}^{j,w}(i)$ that corresponds to $U_j(i)$ is obtained (Please refer to subsection \ref{subsec:one}). Each $s_{int}^{j,w}(i)$ partitions $P$ into two regions. Let $P_R^j$ be the region containing $B_w^j$. We define $R= \bigcup_{\forall w} {B_w^j}$. For each $s_{int}^{j,w}(i)$, a curve $s_{ext}^{j,w}(i)$ inside $P \backslash P_R^j$ is generated, it is at a distance of $d_M^i$ from $s_{int}^{j,w}(i)$. $c_j^w(i)$ is the region enclosed between $s_{int}^{j,w}(i)$ and $s_{ext}^{j,w}(i)$. The critical region is defined as $C_j(i)= \bigcup_{w} c_j^w(i)$. In Figure \ref{fig:type4}, $g_3$ is a type $2$ guard, and $g_1$ and $g_2$ are of type $1$. $s_{int}^j(1)$ is the boundary between sets $A(1)$ and $U_j(1)$. $s_{int}^j(2)$ is defined in a similar manner. $s_{int}^{j}(i)$ and $s_{ext}^{j}(i)$ are shown as blue segments and black curves respectively.

\begin{figure}[htb]%scale=0.3
	\begin{center} 
		\includegraphics[width=0.5\linewidth,height=0.3\linewidth]{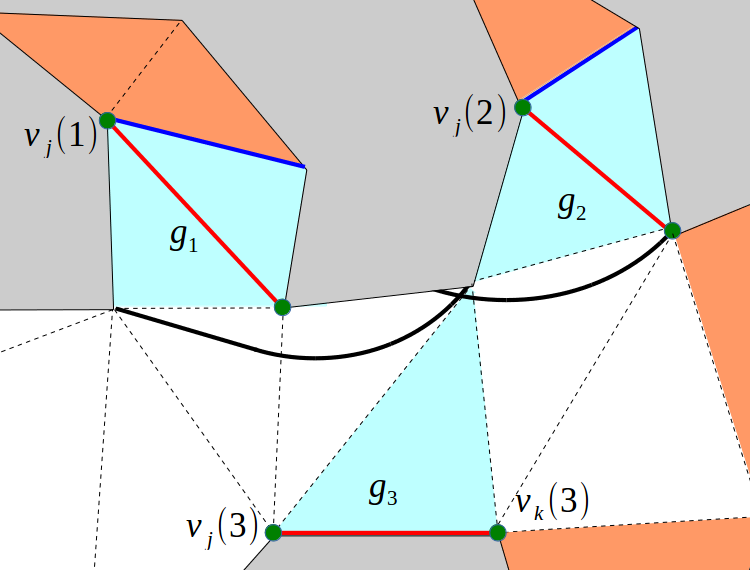} %totalheight=0.18\textheight
	\end{center}
	\caption{A type $1$ guard $g_2$ is illustrated. It is the only guard in $N_k(3)$. Since $C_j(2)$ is defined, $g_3$ meets the definition of a type $2$ guard.}
	\label{fig:type4}
\end{figure}

\subsection{Type $3$ Guards}
\label{subsec:three}

A guard $g_i \in S_g$ is a type $3$ guard if it is not of type $0$,$1$ or $2$. There is at least one regular triangle in $T_j(i)$ and one in $T_k(i)$ which are adjacent to $B(i)$, such that not all the neighbors that can cover those triangles have their critical regions defined. Hence, we cannot determine all the regions $R$ that define the internal critical curve of type $3$ guards. We proceed by transforming all the regular triangles in $R_j(i)$ (or $R_k(i)$) into unsafe triangles. This turns $g_i$ into a type $1$ guard. Consequently, $g_i$ has the task of covering all the triangles in $T_j(i)$(or $T_k(i)$). Thus, all the triangles in $R_j(i)$ are ``safe triangles'' for the neighboring guards. Hence, some guards in $N_j(i)$ can become type $0$, type $1$ or type $2$ guards. In Figure \ref{fig:type5_2}, two type $3$ guards $g_1$ and $g_2$ are illustrated. $t_1$ was originally a regular triangle. After converting $g_1$ into a type $1$ guard, $t_1$ becomes an unsafe triangle that must be covered by $g_1$. Thus $t_1$ is considered a ``safe triangle'' for $g_2$, so it also becomes a type $1$ guard.

\begin{figure}[htb]%scale=0.4
	\begin{center} 
		\includegraphics[width=0.53\linewidth,height=0.46\linewidth]{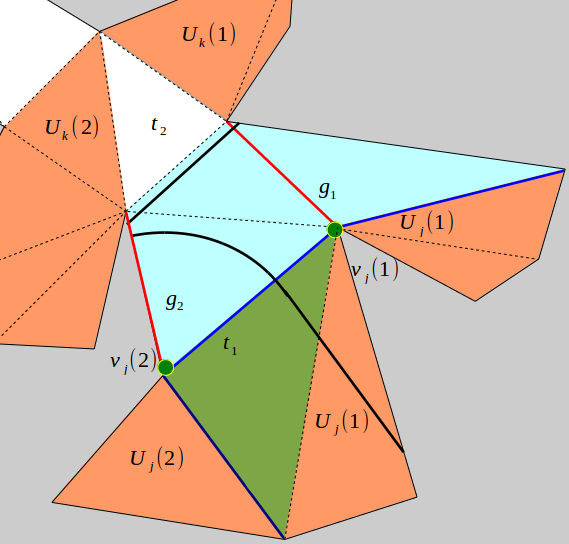} %totalheight=0.22\textheight
	\end{center}
	\caption{$t_1$ is turned into an unsafe triangle covered by $g_1$. Thus, $g_1$ and $g_2$ become type $1$.}
	\label{fig:type5_2}
\end{figure}

\subsection{Reactive Strategy for Type $1$ and Type $2$ Guards}
\label{subsec:reactive}

Let $p_I \in C_j(i)$. If $g_i$ is a type $1$ guard, $d_{min}^j(i)$ is the minimum distance between $p_I$ and $s_{ext}^{j}(i)$. Otherwise, if $g_i$ is a type $2$ guard, $d_{min}^j(i) = \max\limits_{\forall w} d_{min}^{j,w}(i)$, where each $d_{min}^{j,w}(i)$ is the minimum distance between $p_I$ and $s_{ext}^{j,w}(i)$. The following equation maps the location of the intruder to obtain the location of a guard, $p_{g_i}$, along its diagonal:	
\begin{equation}
\label{eq:1}
p_{g_i}= p_{v_k(i)}+ \frac{d_{min}^j(i)}{d_M^i}(p_{v_j(i)}-p_{v_k(i)}),
\end{equation}
where $p_{v_j(i)}$ and $p_{v_k(i)}$ are the coordinates of $v_j(i)$ and $v_k(i)$ respectively. If $p_I \notin C_j(i)$ we have two cases: if $p_I \in P_R$, $g_i$ remains static at $v_j(i)$, otherwise, it stays at $v_k(i)$. Lemma \ref{lemma:1} gives a sufficient condition for a guard to cover all the triangles incident to its diagonal.

\begin{lemma} 
	\label{lemma:1}
	Let $g_i$ be a type $1$ or type $2$ guard. The motion strategy induced by $C_j(i)$ guarantees that $I$ will be visible to $g_i$ when $p_I \in R$.
\end{lemma}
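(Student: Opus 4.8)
The plan is to reduce the visibility claim to a statement about where the strategy places $g_i$. Each triangle of $U_j(i)$ is convex and has $v_j(i)$ as a vertex, so a guard standing at $v_j(i)$ sees all of $R$; moreover $R$ is coverable only by $g_i$. Hence it suffices to prove that the strategy keeps $g_i$ at $v_j(i)$ at every instant for which $p_I\in R$. I would split this into (i) a \emph{feasibility} claim, that the map \eqref{eq:1} prescribes a guard trajectory realizable at speed $v_g$, and (ii) a \emph{boundary-value} claim, that this trajectory puts $g_i$ at $v_j(i)$ exactly when $p_I$ reaches $s_{int}^j(i)$ and keeps it there on $R$.

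Part (i) is where I expect the real work. Rather than differentiate \eqref{eq:1} — $d_{min}^j(i)$ is generically nonsmooth, with kinks where $s_{ext}^j(i)$ bends, where the foot of the nearest point jumps, and (type $2$) where the defining maximum switches — I would argue entirely with Lipschitz constants. The Euclidean distance to a fixed set is $1$-Lipschitz, and a pointwise maximum of finitely many $1$-Lipschitz functions is again $1$-Lipschitz, so $d_{min}^j(i)$ is a $1$-Lipschitz function of $p_I$ for both guard types. Reading \eqref{eq:1} as $p_{g_i}=F(p_I)$ and using $|p_{v_j(i)}-p_{v_k(i)}|=l_i$ shows $F$ is Lipschitz with constant $l_i/d_M^i$. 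Composing with an intruder path of speed at most $v_e$ and using $d_M^i=l_i r$ gives, for all $t_1<t_2$,
\begin{align*}
|p_{g_i}(t_2)-p_{g_i}(t_1)| &\le \frac{l_i}{d_M^i}\, v_e\, |t_2-t_1| \\
&= \frac{v_e}{r}\,|t_2-t_1| = v_g\,|t_2-t_1|,
\end{align*}
so the prescribed motion never requires speed above $v_g$ and a guard of speed $v_g$ can realize it exactly. This is precisely why the offset $d_M^i=l_i r$ was selected.

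For part (ii) I would read off the boundary values of \eqref{eq:1}. Since $s_{ext}^j(i)$ is the offset of $s_{int}^j(i)$ at distance $d_M^i$, a point $p_I\in s_{int}^j(i)$ satisfies $d_{min}^j(i)=d_M^i$, so the ratio in \eqref{eq:1} equals $1$ and $p_{g_i}=p_{v_j(i)}$; symmetrically $p_I\in s_{ext}^j(i)$ gives $d_{min}^j(i)=0$ and $p_{g_i}=p_{v_k(i)}$, which matches the static rule applied outside $C_j(i)$, so the position map is continuous across both curves. For a type $2$ guard the same reading holds componentwise: on $s_{int}^{j,w}(i)$ one has $d_{min}^{j,w}(i)=d_M^i$, so that component drives the maximum and \eqref{eq:1} again yields $p_{g_i}=p_{v_j(i)}$. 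Finally, on the interior of $R$ — which lies in $P_R$ and meets $C_j(i)$ only along $s_{int}^j(i)$ — the static rule for $p_I\in P_R$ fixes $p_{g_i}=p_{v_j(i)}$ directly.

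To assemble the proof I would follow the intruder's trajectory: the only way into $R$ is to cross $s_{int}^j(i)$ out of $C_j(i)$, and by part (i) the guard tracks \eqref{eq:1} exactly through $C_j(i)$, so by part (ii) it is already at $v_j(i)$ at the crossing instant and remains there while $p_I\in R$. Thus $I$ is visible to $g_i$ throughout $p_I\in R$. The main obstacle, as noted, is the feasibility estimate in part (i): the boundary values are immediate, but the speed bound must be obtained from Lipschitz continuity rather than from derivatives, so as to survive the nonsmoothness of $d_{min}^j(i)$ and of the type $2$ maximum.
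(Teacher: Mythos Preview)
Your argument is correct and follows the same overall arc as the paper's proof: both conclude by observing that the intruder can reach $R$ only by crossing $s_{int}^j(i)$, and that at the crossing instant the guard is already at $v_j(i)$ because the critical region has width $d_M^i$. The paper's proof, however, is essentially a one-line sketch: it simply asserts that ``the distance between $s_{int}^j(i)$ and $s_{ext}^j(i)$ is $d_M^i$, which is the maximum distance that $I$ can travel while $g_i$ moves from $v_k(i)$ to $v_j(i)$,'' and concludes directly. It does not verify that the intermediate positions prescribed by \eqref{eq:1} are attainable at speed $v_g$ for an arbitrary intruder trajectory, nor does it treat the type~$2$ maximum separately. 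Your Lipschitz argument in part~(i) fills exactly this gap: by showing that $p_I\mapsto d_{min}^j(i)$ is $1$-Lipschitz (including the type~$2$ case via the max of $1$-Lipschitz functions) and that \eqref{eq:1} therefore has Lipschitz constant $l_i/d_M^i=1/r$, you obtain the speed bound $v_g$ for the full reactive trajectory rather than only for the endpoint-to-endpoint traversal. This is a genuine strengthening of the paper's informal reasoning, not a different strategy; what it buys is a proof that survives non-monotone intruder motion and the nonsmoothness of $d_{min}^j(i)$, points the paper leaves implicit.
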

\begin{proof}
		If $p_I \notin C_j(i)$, $g_i$ will remain static at $v_j(i)$ if $p_I \in P_R$ or $v_k(i)$, otherwise. If $p_I \in P \backslash R$, and $I$ moves towards $R$, then $g_i$ remains static at $v_j(i)$ (or $v_k(i)$ if $p_I \in P \backslash P_R$) until $p_I \in C_j(i)$. Thus, $g_i$ starts its motion leaving the endpoint where it is located. The distance between $s_{int}^j(i)$ and $s_{ext}^j(i)$ is $d_M^i$, which is the maximum distance that $I$ can travel while $g_i$ moves from $v_k(i)$ to $v_j(i)$. This guarantees that regardless of the motion of $I$, once that it arrives to $s_{int}^j(i)$, $g_i$ is located in $v_j(i)$ so it is covering the set of triangles that contain $R$. Hence, $I$ is visible to $g_i$.
\end{proof}

\section{Modeling the Problem as an Hybrid System}
\label{sec:overall}

In this section we formulate the tracking problem as an hybrid system. An hybrid system is a mathematical model that is able to describe the evolution of continuous dynamics and also the discrete switching logic, which can include uncertainty in both the continuous and discrete input variables \cite{Tomlin:2000}. An hybrid automaton is a collection $H=(Q,X,\Sigma,V, \mbox{Init}, f, \mbox{Inv}, R_t)$ where $Q \cup X$ is a finite collection of state variables, $Q$ is the set of discrete states, called modes, and $X=\mathbb{R}^n$ is the set of continuous states. The state of $H$ is $(q,x) \in Q \times X$. $\Sigma=\Sigma_1 \cup \Sigma_2$ is a finite collection of discrete input variables, $\Sigma_1$ is the set of discrete control inputs and $\Sigma_2$ the set of discrete disturbances. $V=U \cup D$ is the set of continuous input variables, $U$ and $D$ are the sets of continuous inputs and disturbances respectively. The input of $H$ is $(\sigma,v) \in \Sigma \times V$. $\mbox{Init} \subseteq Q \times X$ is the set of initial states. $f: Q \times X \times V \rightarrow X$ is a map from the set that consists of the system state and the continuous inputs to the set of continuous states. It describes the continuous evolution of $x \in X$ in each $q \in Q$. This function is assumed to be globally Lipschitz in $X$ and continuous in $V$. $\mbox{Inv} \subseteq Q \times X \times \Sigma \times V$ is the invariant set, it includes those states and inputs for which continuous evolution is allowed. Finally, $R_t: Q \times X \times \Sigma \times V \rightarrow 2^{Q \times X}$ is a map that encodes the discrete transitions from one mode to other. %Notice that the system can transit from one state to a subset of states. To ensure that the hybrid automaton does not deadlock, if $(q,x,\sigma,v) \notin \mbox{Inv}$ then $R(q,x,\sigma,v) \neq \emptyset$.

In our problem, depending on the location of $I$ some guards might remain static while others might require to move. We partition the environment using the critical curves of the guards and their intersections. We define $S_R$ as the set of regions obtained in the partition. Let $R_{ext}= \overline{\bigcup\limits_{\forall g_i \in S_g}{C_j(i)}} \in S_R$ be the external region, it includes all the locations that are not inside any critical region. When $I$ is inside any region $R_j \in S_R$, a specific subset of the guards is required to move (if $I \in R_{ext}$, all the guards remain motionless). There is a different continuous control for each one of the regions. Consequently, we define a mode $q_j$ for each $R_j \in S_R$. The collection of all $q_j$ is the set $Q$. In Fig. \ref{fig:regions} a) an environment with the regions generated by the intersection of critical regions is illustrated. $R_{ext}$ is shown as a blue area. $X$ contains all the combinations of possible locations of $I$ and the guards. Since each $g_i$ is constrained to move along $h_i$, $X=h_1 \times h_2 \times \ldots \times h_m \times P$, where $1 \leq m \leq \rfloor \frac{n}{4} \lfloor$ is the number of deployed guards, and $P$ is the environment which represents the set of all locations of $I$. We have $\Sigma_1$ since there is a discrete control that triggers the motion of different sets of guards when $I$ transitions from one region to another. There are not discrete disturbances, so $\Sigma_2= \emptyset$. $\Sigma= \Sigma_1$ is the set of discrete variables $\sigma_j$. There is a $\sigma_j$ for each $R_j \in S_R$. We define $\sigma_j$ as a discrete variable such that $\sigma_j=j$ when $p_I \in R_j$. We are interested in the transitions between modes. Transition happens when $I$ crosses the boundary $\partial R_{j,k}$ between adjacent regions $R_j$ and $R_{k}$. Thus, $\sigma_j \in \{j, \mathbb{I}_j \}$ where $\mathbb{I}_j$ is the set of indexes of the regions that are adjacent to $R_j$. Thus, if $I \in \partial R_{j,k}$ then $\sigma_j=k$ triggers the transition from mode $q_j$ to $q_k$.

\begin{figure}[thpb]
	\begin{center}
		\subfigure[]{\includegraphics[width=0.45\linewidth,height=0.45\linewidth]{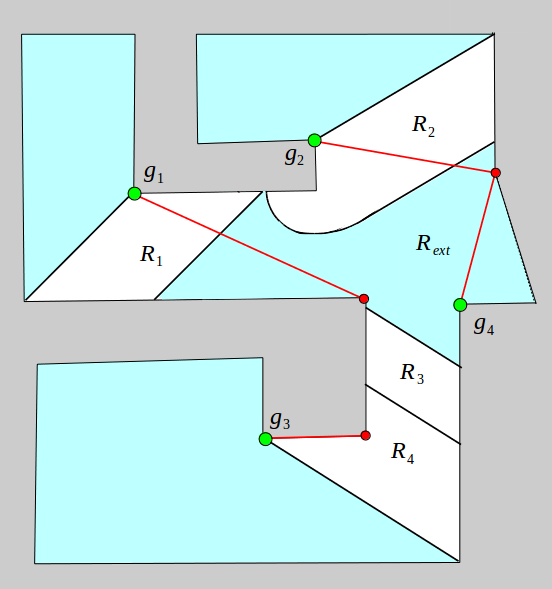}}
		%\subfigure[]{\includegraphics[totalheight=0.18\textheight]{new_regions2}}
		\subfigure[]{\includegraphics[width=0.7\linewidth,height=0.5\linewidth]{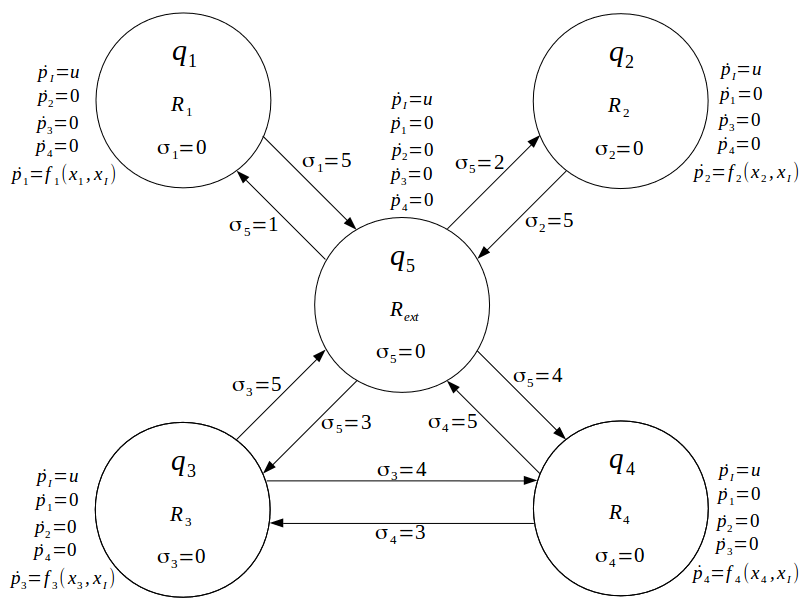}}
		\end{center}
		\caption{a) The partition of the environment obtained from the critical regions is illustrated. b) The corresponding hybrid system is shown.}
	\label{fig:regions}
\end{figure}

Since the continuous control described in (\ref{eq:1}) is purely reactive, $U= \emptyset$. The continuous dynamics of the system depends on $p_I$, then $V=D$. Depending on $p_I(0)$, the guards are located in a specific location along their diagonals. The set of those states is the init set. $f$ is defined by (\ref{eq:1}). For each guard that is required to move when $p_I \in R_j$, (\ref{eq:1}) determines its location along $h_i$. Since $\mbox{Inv}$ consists of those states and inputs $(q,x,\sigma_1,d)$ for which the continuous evolution of $H$ is allowed (there is not a discrete transition), the invariant set consists of all the states where $I \in R_j \backslash \partial R_{j,k}$. $R_t$ is defined by the adjacency between regions $R_j$ and the discrete controls that trigger the change between modes.

Consider the example illustrated in Figure \ref{fig:regions} a). The diagonals of the guards, and the partition of $P$ obtained from the critical regions are shown. The hybrid automaton representing the problem is illustrated in Figure \ref{fig:regions} b). The components of the automaton for this example are the following: $Q={q_1,q_2,q_3,q_4,q_5}$ where each $q_i$ corresponds to $R_i$ with $i \in \{1,2,3,4\}$ and $q_5$ corresponding to $R_{ext}$. $X=h_1 \times h_2 \times h_3 \times h_4 \times P$. $\Sigma= \Sigma_1 = \{\sigma_1,\sigma_2,\sigma_3,\sigma_4,\sigma_5 \}$, where $\sigma_1 \in \{1,5\}$, $\sigma_2 \in \{2,5\}$, $\sigma_3 \in \{3,4,5\}$, $\sigma_4 \in \{4,3,5\}$, and $\sigma_5 \in \{5,1,2,3,4\}$. $V=D=P$, where the interior of the polygon $P$ represents the set of all possible locations of $I$. From (\ref{eq:1}), it is clear that we have a continuous function for each guard $g_i$, $f_i: p_{gi} = p_{vk}(i) + \frac{d_{min}^j(i)}{d_M^i} (p_{vj}(i)-p_{vk}(i))$ with $i \in \{1,2,3,4\}$. Init set includes all the states where the guards are located at their corresponding locations according to (\ref{eq:1}). The discrete transition mapping $R_t$ is represented in Figure \ref{fig:regions} b). The invariant set, Inv, is the set that includes all the states where (\ref{eq:1}) can be used to describe the evolution of the system, which means that it does not include those states where $I$ is located at the boundary between any two regions.

\section{Reachability Analysis of the Hybrid System}
\label{sec:reachability}

An important concept in hybrid systems is \textit{trajectory}, $\tau=(I_i)_{i=0}^N$. It is a finite or infinite sequence of intervals of the real line such that $I_i=[ \tau_i, \tau'_i ]$ for $i<N$, and if $N< \infty$, $I_N=[\tau_N, \tau'_N]$ or $I_N=[\tau_N, \tau'_N)$. Also, for all $i$, $\tau_i \leq \tau'_i=\tau_{i+1}$.
Each $\tau_i$ is a time at which there is a transition between modes, and all the time inside each interval is continuous. Since the transitions are assumed to be instantaneous, then multiple discrete transitions can take place at the same time. An \textit{execution} of an hybrid automaton is an hybrid trajectory $\chi=(\tau,q,x,\sigma,v)$ such that $(q(\tau_0),x(\tau_0)) \in \mbox{Init}$. For the continuous evolution $q(\cdot)$, $\sigma(\cdot)$ must be constant, $v(\cdot)$ is piecewise continuous and $f$ must describe the change of the continuous state $x(t)$ for all $t \in [\tau_i, \tau'_i)$, and $(q(t),x(t),\sigma(t),v(t)) \in \mbox{Inv}$. Finally, for the discrete evolution, $(q(\tau_{i+1}),x(\tau_{i+1})) \in R(q(\tau'_{i}),x(\tau'_{i}),\sigma(\tau'_{i}),v(\tau'_{i}))$ must hold for all $i$.

A \textit{trajectory acceptance condition} is defined as an arbitrary specification that the execution of the system must satisfy. We decide to define a specification in terms of the safety. The safe set $F \subseteq Q \times X$ is a subset of the state space in which the system is defined to be safe. In our case, it implies that $I$ is visible to at least one guard in any state $(q,x) \in F$. The unsafe set is defined as $G=F^c$. Given the critical regions of the guards and the hybrid automaton, we can determine if it is possible to keep track of $I$ all the time through a reachability analysis. We must ensure that at any time, regardless of the motion of $I$ the system remains inside a safe set. Otherwise, at least one additional guard would be required. The reachability analysis allows us to determine the \textit{maximal controlled invariant set} of a safe set $F$, which is defined as the maximal subset of $F$ such that there exists a controller that guarantees that if any execution starts in the subset, the execution stays in the subset for all future time. Which implies that as long as the initial state of the system belongs to $F$ then $I$ will be visible to at least one guard all the time. The controllable predecessor sets $Pre_1$ and $Pre_2$ of a given set $K \subseteq Q \times X$ are defined as follows: $Pre_1(K)=\{ (q,x) \in K: \exists (\sigma_1,u) \in \Sigma_1 \times U \forall (\sigma_2,d) \in \Sigma_2 \times D, (q,x,\sigma_1,\sigma_2,u,d) \notin \mbox{Inv} \wedge R(q,x,\sigma_1,\sigma_2,u,d) \subseteq K  \}$, and $Pre_2(K^c)=\{ (q,x) \in K: \forall (\sigma_1,u) \in \Sigma_1 \times U \exists (\sigma_2,d) \in \Sigma_2 \times D, R(q,x,\sigma_1,\sigma_2,u,d) \cap K^c \neq \emptyset  \} \cup K^c $.

As described in \cite{Tomlin:2000}, $Pre_1$ consists of all the states in $K$ for which controllable actions can force the system to remain in $K$ while there is a discrete transition. In contrast, $Pre_2$ contains all the states in $K^c$ and all states in $K$ where disturbances are able to force the system outside $K$. Another important concept is the \textit{Reach} operator. Given $G,E \subseteq Q \times X$ such that $G \cap E = \emptyset$, $Reach(G,E)= \{ (q,x) \in Q \times X| \forall u \in U \exists d \in D \mbox{ and } t \geq 0 \mbox{ such that } ((q(t),x(t)) \in G \mbox{ and } (q(s),x(s)) \in \prod{(Inv) \backslash E} \mbox{ for } s \in [0,t] \}$, where $(q(s),x(s))$ is the continuous state trajectory starting at $(q,x)$, and $\prod{(Inv)}$ represents the state space components of Inv. $Reach(G,E)$ contains those states that belong to $G$ and also states in $Q \times X \backslash G$ from which, for all controls $u(\cdot) \in U$ there is a disturbance $d(\cdot) \in D$ such that the state trajectory $(q(s),x(s)$ drives the system to $G$ while avoiding $E$. Using the Maximal Controlled Invariant Set algorithm presented in \cite{Tomlin:2000} the maximal controlled invariant set is obtained.

\begin{algorithm}
\caption{Maximal Controlled Invariant Set}
\label{alg:maximal}
\begin{algorithmic}
\State \textbf{Input}: $W^0=F$, $W^{-1}=\emptyset$, $i=0$
\State \textbf{Output}: $W^*$
\While{ $W^i \neq W^{i-1}$}
\State 1. $i \leftarrow i-1$
\State 2. $W^{i-1} \leftarrow W^i \backslash Reach(Pre_2((W^i)^c),Pre_1(W^i))$
\EndWhile
\State 3. $W^*=W^i$
\end{algorithmic}
\end{algorithm}

For the tracking problem, $Pre_1(W^0)=Pre_1(F)$ consists of all the states where there is at least one guard covering the triangle at which $I$ is located, such that there would be at least one guard covering the triangle where $I$ is located after a transition from one region $R_j$ to a region $R_k$. Since the number of guards covering the triangle where $I$ is located and their location do not change after a transition between modes (the location of $I$ does not change when the transition happens), then if $I$ is visible to a guard it will be visible after a discrete transition. Thus, $Pre_1(W^0)$ consists of all the states where $p(I) \in \partial R_{j,k}$ for any $j \neq k$, such that there is at least one guard covering the triangle where $I$ is located. Hence, all states where $p_I$ is in the interior of any $R_j$ do not belong to $Pre_1(W^0)$, neither the states where the location of the guards is such that they are not covering the triangle where $I$ is located.

$Pre_2((W^0)^c)=Pre_2(F^c)$ consists of all the states where the triangle at which $I$ is located is not covered by any guard, and also of all safe states where $p_I \in \partial R_{j,k}$ such that after a discrete transition, the triangle at which $I$ is located will not be covered by any guard. As mentioned in the definition of $Pre_1(W^0)$, if there was at least one guard covering the triangle where $I$ is located before a discrete transition, it will not change after the transition takes place. Therefore, $Pre_2((W^0)^c)$ consists only of the unsafe states.

$Reach(Pre_2((W^0)^c),Pre_1(W^0))$, consists of all the unsafe states, and also of all the safe states from which $I$ can follow a trajectory that leads to a state in $Pre_2((W^0)^c)$ while avoiding any state in $Pre_1(W^0)$. Consequently, $Reach(Pre_2((W^0)^c),Pre_1(W^0))$ consists of all states where $I$ is located at the interior of any $R_j$ such that there is at least one unsafe state with the intruder's location inside $R_j$. Since there is no control $u$ that can prevent $I$ to move towards the unsafe state in the interior of $R_j$, then every state in which $I$ is in the interior of $R_j$ belongs to $Reach(Pre_2((W^0)^c),Pre_1(W^0))$. Hence, $Reach(Pre_2((W^0)^c),Pre_1(W^0))$ includes all the states where $I$ is located inside a region $R_j$ with an unsafe state. We call those $R_j$ regions as \textit{forbidden}. Therefore, $W^{-1} \leftarrow W^0 \backslash Reach(Pre_2((W^0)^c),Pre_1(W^0))$ consists of those safe states where the intruder is located in regions $R_j$ that are not forbidden.

In the second iteration, $Pre_1(W^{-1})$ consists of all states where $p(I) \in \partial R_{j,k}$ for any $j \neq k$, such that there is at least one guard covering the triangle where $I$ is located and $R_k$ is not a forbidden region. $Pre_2((W^{-1})^c)$ consists of all the unsafe states and all the safe states such that $p(I) \in \partial R_{j,k}$ and $R_k$ is a forbidden region. Consequently, $Reach(Pre_2((W^{-1})^c),Pre_1(W^{-1}))$ includes all the states from which the intruder can reach the boundary $\partial R_{j,k}$ between any region $R_j$ and a forbidden region $R_k$. This includes all the states where $I$ is located at the interior of any region $R_j$ that shares a boundary with $R_k$. From this point it is clear that the execution of the algorithm will eventually lead to the definition of a forbidden region that is equal to the whole environment, and therefore $W^*$ is the empty set. Consequently, if there is at least one location $p_{unsafe}$ inside the environment such that there is not any guard covering the triangle where the intruder is located when $p_I=p_{unsafe}$, then there is no guarantee that the intruder can be tracked all the time by the set of guards deployed in the environment. Otherwise, if there is not such location $p_{unsafe}$, then is guaranteed that the set of guards deployed in the environment, following the strategy given by their critical regions will always keep track of $I$.

Given the definition of the critical regions in Section \ref{sec:procedure} and the definition of the different types of triangles. Unsafe states can only exists on regular and unsafe triangles. The following results establish the existence of $p_{unsafe}$ points that can lead to an empty maximal controlled invariant set. First, we have an unsafe state if guard $g_i$ has unsafe zones $U_j(i)$ and $U_k(i)$ and its critical region $C_j(i)$ intersects with $U_k(i)$. According to (\ref{eq:1}) it is clear that if the intruder follows the shortest path between $U_j(i)$ and $U_k(i)$, when it reaches $U_k(i)$, there will not be any guard covering the triangle where $I$ is located. So every state that includes that has a point inside $C_j(i) \cap U_k(i)$ is a $p_{unsafe}$ point. Also, we have the following result regarding regular triangles.

\begin{lemma} 
	\label{lemma:3}
	A regular triangle $t$ that is covered by a set of guards $S_t \subset S_g$, does not contain $p_{unsafe}$ points if and only if $\bigcap_{g_i \in S_t} C_j(i) \cap t$ is an empty set or a single point.
\end{lemma}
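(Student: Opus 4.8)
The plan is to reduce the geometric statement to a set-theoretic identity about critical regions and then read off the equivalence. First I would fix a candidate intruder position $p_I\in t$ and determine exactly when a single guard $g_i\in S_t$ covers $t$. Because $t$ is a regular triangle that $g_i$ covers from its relaxed endpoint $v_k(i)$, the reactive strategy of Subsection~\ref{subsec:reactive} together with (\ref{eq:1}) places $g_i$ at $v_k(i)$, for $p_I\in t$, exactly when $d_{min}^j(i)=0$, i.e. when $p_I\in s_{ext}^j(i)$ or $p_I\notin C_j(i)$ (using that $t$ lies on the relaxed side of $s_{int}^j(i)$, so $t\cap P_R=\emptyset$); otherwise $g_i$ has already left $v_k(i)$ and no longer covers $t$. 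Hence $g_i$ fails to cover $t$ precisely when $p_I\in C_j(i)\setminus s_{ext}^j(i)$, and $p_I$ is a $p_{unsafe}$ point of $t$ iff \emph{every} guard of $S_t$ fails at once. Writing $\mathcal U$ for this set of unsafe positions and $\mathcal I=\bigcap_{g_i\in S_t}C_j(i)\cap t$ for the set in the statement, the containments $\bigcap_{g_i\in S_t}\mathrm{int}\,C_j(i)\cap t\subseteq\mathcal U\subseteq\mathcal I$ are immediate and drive both implications.

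For the ``if'' direction I would argue on the two allowed shapes of $\mathcal I$. If $\mathcal I=\emptyset$ then $\mathcal U\subseteq\mathcal I=\emptyset$ and $t$ carries no unsafe point. If $\mathcal I=\{p_0\}$, the open lower bound $\bigcap_{g_i\in S_t}\mathrm{int}\,C_j(i)\cap t$ is an open set contained in a single point, hence empty, so $\mathcal U\subseteq\{p_0\}$; it then suffices to check that $p_0$ itself is safe. Here I would use that the critical regions of distinct guards in $S_t$ reach into $t$ from their respective relaxed endpoints and therefore present their external curves toward one another, so a lone contact point $p_0$ must lie on $s_{ext}^j(i)$ for at least one $g_i\in S_t$; that guard is then at $v_k(i)$ and covers $t$, so $p_0\notin\mathcal U$ and $\mathcal U=\emptyset$.

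For the ``only if'' direction I would prove the contrapositive: if $\mathcal I$ contains at least two points then it has nonempty interior, because the boundaries $s_{int}^j(\cdot)$ and $s_{ext}^j(\cdot)$ are finite unions of segments and circular arcs in general position, so two full-dimensional critical regions that meet in more than one point must overlap in positive area. Picking $p\in\mathrm{int}(\mathcal I)\cap\mathrm{int}(t)$ gives $p\in\bigcap_{g_i\in S_t}\mathrm{int}\,C_j(i)\cap t\subseteq\mathcal U$, so $t$ contains an unsafe point, contradicting the hypothesis. Combining the two directions yields the stated equivalence.

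The routine part is the set-theoretic sandwich; the genuine obstacles are the two geometric facts used in the boundary cases: that a more-than-a-point intersection is necessarily two-dimensional (ruling out one-dimensional overlaps), and that a single contact point always falls on an external critical curve and is therefore covered. The second is the more delicate of the two, since it depends on the relative placement of the protected regions $R$ of the competing guards and on the orientation of each $C_j(i)$ (the external curve facing away from the endpoint $v_j(i)$ that the guard defends); I expect to spend most of the effort justifying that neighboring critical regions can only touch along their $s_{ext}$ sides.
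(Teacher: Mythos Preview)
Your argument follows the same logical skeleton as the paper's proof: both directions hinge on the observation that a guard $g_i\in S_t$ sits at $v_k(i)$ (and therefore covers $t$) exactly when the intruder is outside $C_j(i)$, so $t$ contains an unsafe point iff the intruder can be placed simultaneously inside every $C_j(i)$ for $g_i\in S_t$. The paper argues each implication by contradiction in essentially one sentence, without isolating the two boundary issues you flag.

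Where you diverge is in rigor rather than strategy. The paper does not distinguish the single-point case from the empty case, nor does it justify why ``more than one point'' forces a two-dimensional overlap; it simply treats $\bigcap_{g_i\in S_t}C_j(i)\cap t$ as either a region or not. Your sandwich $\bigcap_i\mathrm{int}\,C_j(i)\cap t\subseteq\mathcal U\subseteq\mathcal I$ and the separate treatment of $|\mathcal I|=1$ are refinements the paper omits. The two obstacles you identify as delicate---that a more-than-a-point intersection has nonempty interior, and that a lone contact point lies on some $s_{ext}^j(i)$---are precisely the steps the paper leaves implicit; be aware that the ``general position'' assumption you invoke for the first is not stated anywhere in the paper, so if you want your version to be airtight you would need to argue it from the construction of the critical curves rather than assume it.
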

\begin{proof}
	Assume $p_{g_i} = p_{v_k(i)}$, $\forall \: g_i \in S_t$ , $t$ is a safe triangle, and $\bigcap_{g_i \in S_t} C_j(i) \cap t$ contains more than one point. When $p_I \in (\bigcap_{g_i \in S_t} C_j(i) \cap t)$, $p_{g_i} \neq p_{v_k(i)}$ $\forall g_i \in S_t$, then all the guards $g_i \in S_t$ leave the endpoints from which they covered $t$ according to (\ref{eq:1}), so $t$ becomes uncovered. Since visibility of the intruder is not guaranteed, $t$ is not a safe triangle, which is a contradiction. Conversely, if $t$ is not a safe triangle and $\bigcap_{g_i \in S_t} C_j(i) \cap t$ contains at most one point, there does not exist a region inside $t$ where the presence of the intruder inside it causes all $g_i$ to leave $v_k(i)$. Hence $t$ is covered by at least one guard. Therefore, any time that $p_I \in t$ there is at least one guard covering $t$ so there cannot be $p_{unsafe}$ points in $t$.
\end{proof}

\section{Computation of Maximum Speed Ratio}
\label{sec:speed}

We know that the tracking strategy works if the condition in Lemma \ref{lemma:3} is satisfied. However, if the condition fails it means that there is at least one unsafe state that can be reached when the intruder is located at a specific point inside the environment. In this section, we present a method to obtain an upper bound for the speed ratio $r$ that ensures that the guards can track the intruder all the time. To determine this upper bound, we consider the two cases that indicate the necessity of additional guards to track the intruder. 

For the first case, let $g_i \in S_g$ such that $U_j(i),U_k(i) \neq \emptyset$. If $d_{min}^j(i)<d_M^i$, where $d_{min}^j(i)$ is the minimum distance between $U_j(i)$ and $U_k(i)$, then $g_i$ is incapable of moving from $v_j(i)$ to $v_k(i)$ while the intruder follows the shortest path from $U_j(i)$ to $U_k(i)$. Hence, for each $g_i$, we require that $d_M^i=r l_i \leq d_{min}^j(i)$. Therefore, we have that the maximum speed ratio is $r'= \min\limits_{\forall g_i | U_j(i),U_k(i) \neq \emptyset} \left\{ \frac{d_{min}(i)}{l_i} \right\}$. The second case is when the condition of Lemma \ref{lemma:3} is not met, so there exists at least one point $p_{unsafe}$ in a regular triangle. This implies that $\bigcap_{g_i \in S_t} C_j(i) \cap t$ contains more than one point. Since each critical region is a function of $r$, there exists a maximum speed ratio $r_n$ such that if $r=r_n$, $\bigcap_{g_i \in S_t} C_j(i) \cap t$ is a single point (and for $r< r_n$ it is an empty set).

For each regular triangle $t_n$, we consider the guards $g_i \in S_{t_n}$. We know that $C_j(i)$ grows when $r$ increases and new intersections between critical regions appear. So $r$ can be increased until $\bigcap_{g_i \in S_{t_n}} C_j(n) \cap t_n$ contains more than one point. We start by identifying the first two critical regions that intersect, namely $C_j(i^1)$ and $C_j(i^2)$ with $g_{i^1} \neq g_{i^2} \in S_{t_n}$. Let $r_{i^1,i^2}$ be the speed ratio such that $C_j(i^1) \cap C_j(i^2)$ is a single point $q_{i1}^{i2}$. By the definition of a critical region, we know that $q_{i1}^{i2} \in p_{i1}^{i2}$, where $p_{i1}^{i2} \subset P$ is the path of minimum distance from any $s_{int}^{j,w}(i^1)$ to any $s_{int}^{j,y}(i^2)$. Since each external critical curve is a function of $r$, $r_{i^1,i^2}=\frac{l(p_{i1}^{i2})}{l_{i^1}+l_{i^2}}$, where $l(p_{i1}^{i2})$ is the length of $p_{i1}^{i2}$. Notice that if $g_i$ is type $2$, there is at least one $s_{int}^{j,y}(i)$ that depends on $r$, so $l(p_i^m)$ may not be a constant but a function of $r$. The intersection of critical regions $\bigcap_{g_i \in S_{t_n}} C_j(n)$ depends on the intersection of $C_j(i^1)$ and $C_j(i^2)$, so it can be described as a function $f(i^1,i^2,r)$. However, $\bigcap_{g_i \in S_{t_n}} C_j(n)$ changes when a new critical region intersects with $f(i^1,i^2,r)$. Therefore, the strategy finds the speed ratio $r_{i^1,i^2,m}$ for each $g_m \in S_{t_n} \backslash \{g_i^1, g_i^2\}$ that corresponds to the intersection between $f(i^1,i^2,r)$ and $C_j(m)$. The next critical region to be selected corresponds to the minimum $r_{i^1,i^2,m}$. The procedure continues iteratively until $f(r)=\bigcap_{g_i \in S_{t_n}} C_j^w(n)$ is obtained. The intersection point of $\bigcap_{g_i \in S_{t_n}} C_j^w(n)$, called $q_\cap$, now can be obtained. If $q_\cap  \in t_n$ then the value of $r$ is the maximum speed ratio for $t_n$. Otherwise, $r$ is the speed ratio at which $f(r) \cap t_n \neq \emptyset$. The maximum speed ratio that corresponds to $t_n$ is $r_n=r$, and the maximum speed ratio at which the set of guards deployed initially can track the intruder at all times is $r_{max}= \min \{ \min \{r_n \}, r' \}$.

\begin{proposition} 
	\label{theorem:2}
	The motion strategy given by the critical regions guarantees that the initially deployed guards are able to track the intruder all the time if and only if $r \leq r_{max}$.
\end{proposition}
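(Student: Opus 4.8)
The plan is to establish both directions of the equivalence by leveraging the reachability analysis of Section~\ref{sec:reachability} together with the characterizations of unsafe states obtained in Lemma~\ref{lemma:3} and in the first-case analysis preceding it. The key structural fact, already argued in the reachability section, is that the maximal controlled invariant set $W^*$ is empty if and only if there exists a point $p_{unsafe}$ inside $P$ at which the triangle containing $I$ is uncovered by every guard. Thus the whole proposition reduces to showing that the condition $r \le r_{max}$ is \emph{exactly} the condition under which no such $p_{unsafe}$ point exists.

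First I would prove the ``if'' direction ($r \le r_{max} \Rightarrow$ tracking). By the discussion in Section~\ref{sec:reachability}, unsafe states can only arise on regular or unsafe triangles, so it suffices to rule out $p_{unsafe}$ points of both kinds. The two cases defining $r_{max}=\min\{\min\{r_n\}, r'\}$ are set up to match these two sources precisely: the bound $r \le r'$ guarantees, for every guard $g_i$ with two unsafe zones, that $d_M^i = r\,l_i \le d_{min}^j(i)$, so $g_i$ can traverse $h_i$ faster than $I$ can cross from $U_j(i)$ to $U_k(i)$, eliminating unsafe points on unsafe triangles. The bound $r \le r_n$ for every regular triangle $t_n$ forces $q_\cap \notin \mathrm{int}(t_n)$, hence $\bigcap_{g_i \in S_{t_n}} C_j(i) \cap t_n$ is empty or a single point; by Lemma~\ref{lemma:3}, $t_n$ then contains no $p_{unsafe}$ point. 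Together these show $W^*$ is nonempty, and by the reachability argument tracking is guaranteed (invoking Lemma~\ref{lemma:1} to certify that each guard's reactive strategy actually covers $R$ whenever $I \in R$).

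For the ``only if'' direction I would argue the contrapositive: if $r > r_{max}$, then $r$ exceeds either $r'$ or some $r_n$. In the first case, some $g_i$ has $d_M^i > d_{min}^j(i)$, so when $I$ follows the shortest path from $U_j(i)$ to $U_k(i)$, guard $g_i$ cannot reposition in time and $I$ reaches an uncovered triangle, producing a $p_{unsafe}$ point. In the second case, $r > r_n$ for some regular triangle $t_n$, so by the construction of $r_n$ the intersection $\bigcap_{g_i \in S_{t_n}} C_j(i) \cap t_n$ contains more than one point, and Lemma~\ref{lemma:3} yields a $p_{unsafe}$ point inside $t_n$. Either way $W^* = \emptyset$, so tracking cannot be guaranteed.

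The main obstacle I anticipate is rigorously justifying, in the regular-triangle case, that the iterative construction of $r_n$ (adding one critical region at a time and tracking the function $f(\cdot)$ describing the evolving intersection) genuinely yields the \emph{threshold} speed ratio at which $\bigcap_{g_i \in S_{t_n}} C_j(i) \cap t_n$ first acquires a second point. This requires a monotonicity argument --- that each $C_j(i)$ grows continuously and monotonically with $r$ (since $d_M^i = r\,l_i$ scales the offset distance of $s_{ext}$ from $s_{int}$) --- and an argument that the intersection point $q_\cap$ moves continuously so that ``$q_\cap \in t_n$'' is the correct crossing criterion. For type~$2$ guards the complication noted in the text, that some $s_{int}^{j,y}(i)$ themselves depend on $r$, means $l(p_{i1}^{i2})$ need not be constant; I would handle this by treating the relevant path length as a continuous function of $r$ and appealing to continuity/intermediate-value reasoning rather than the closed-form expression, which only holds when the internal curves are fixed.
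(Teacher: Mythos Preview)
Your proposal is correct and follows essentially the same route as the paper: both reduce the equivalence to the claim that $r \le r_{max}$ holds precisely when no $p_{unsafe}$ point exists, and then invoke the reachability analysis of Section~\ref{sec:reachability} to conclude that absence of $p_{unsafe}$ points is equivalent to guaranteed tracking. Your version is in fact more explicit than the paper's---you spell out the two-case split between unsafe-zone crossings (handled by $r'$) and regular triangles (handled by the $r_n$ via Lemma~\ref{lemma:3}), and you flag the monotonicity/continuity issues for type~$2$ guards that the paper's terse contradiction argument leaves implicit.
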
 
\begin{proof}
Assume that $r \leq r_{max}$ but there is not guarantee that the motion strategy of the guards ensures that $I$ is tracked all the time. Consequently, there are no $p_{unsafe}$ locations in the environment. Therefore, the reachability analysis of Section \ref{sec:reachability} gives a maximal invariant set that allows any point of the environment to be the location of $I$. Since the maximal invariant set was defined using a safe set consistent of all the states in which there is at least one guard covering the triangle where $I$ is located, the result follows. Now assume that the guards can always cover the triangle where $I$ is located but $r > r_{max}$. Since $r > r_{max}$, then there is at least one location $p_{unsafe}$. Therefore, the reachability analysis yields that the maximal invariant set is empty. Which implies that there is not a state from which the system can start that guarantees that $I$ will be always visible to some guard. This is a contradiction. Therefore, $r \leq r_{max}$.
\end{proof}

\begin{figure}[thpb]%scale=0.4
	\begin{center} 
		\includegraphics[height=0.18\textheight,width=0.42\textwidth]{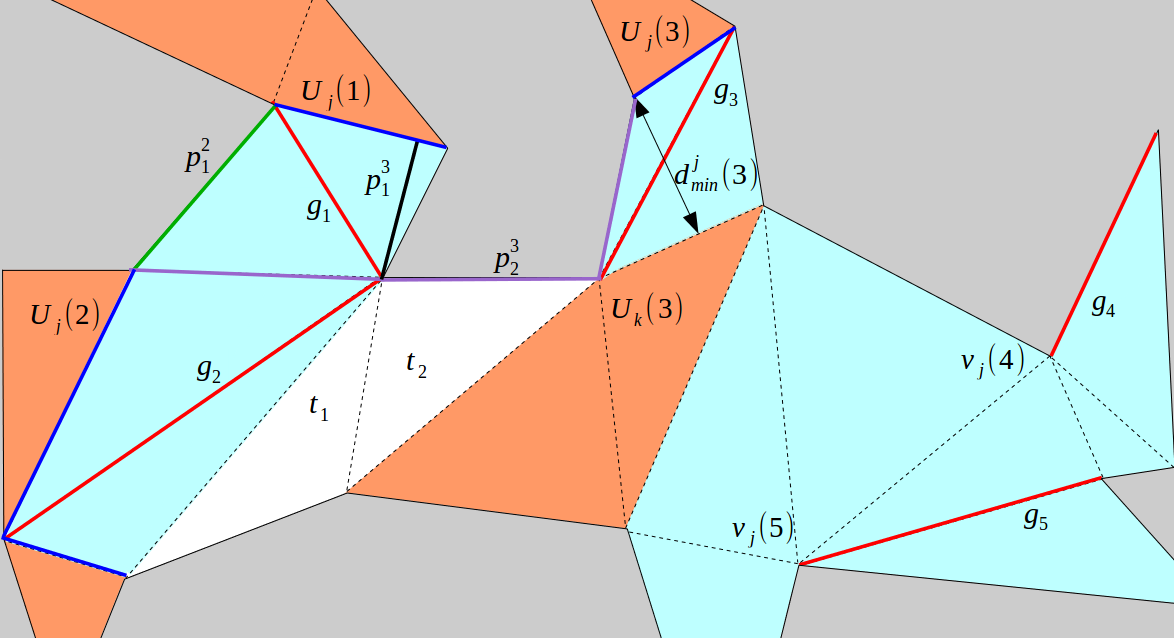} %totalheight=0.17\textheight
	\end{center}
	\caption{Polygon $P$ representing an environment with five guards deployed.}
	\label{fig:example}
\end{figure}

We present an example of the described method. Guards $g_1$, $g_2$, $g_3$, $g_4$ and $g_5$ are deployed in $P$ on red diagonals shown in Figure \ref{fig:example}. $g_4$ is a type $0$ guard since it can cover all the triangles when $p_{g_4}=p_{v_j(4)}$. As a consequence, $g_5$ becomes a type $0$ guard, so we locate it at $v_j(5)$ and all the triangles adjacent to that endpoint become \textit{safe triangles}. For the rest of the guards we illustrate their internal critical curves as blue segments.

\begin{figure}[thpb]%scale=0.4
	\begin{center} 
		\includegraphics[height=0.14\textheight,width=0.35\textwidth]{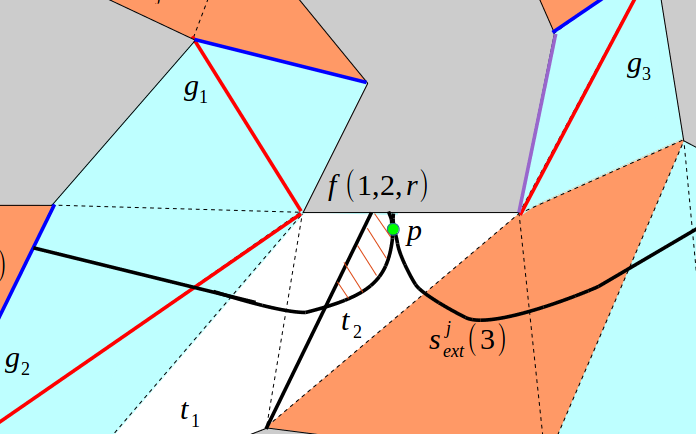} %totalheight=0.13\textheight
	\end{center}
	\caption{Intersection between $f(1,2,r)$ and $s_{ext}^j(3)$ in $t_2$.}
	\label{fig:intersection}
\end{figure}

Since $g_3$ is a type $1$ guard with unsafe zones $U_j(3)$ and $U_k(3)$ and $d_M^3=r l_3$ cannot be greater than $d_{min}^j(3)$, $r'= \frac{d_{min}^j(3)}{l_3}$. Next we consider $t_1$ which can be covered by $g_1$ and $g_2$. We find the path $p_1^2$ between $s_{int}^j(1)$ and $s_{int}^j(2)$ shown as a green segment, then $r_{1,2}=\frac{l(p_1^2)}{l_1+l_2}$. However, $q_1^2 \notin t_1$. Hence, we define a function $f(1,2,r)$ which depends on $s_{ext}^j(1)$ and $s_{ext}^j(2)$. It follows that the maximum speed ratio for $t_1$ is $r_1= \argmin\limits_r d(f(1,2,r),e_1)$. Finally, we consider $t_2$ which can be covered by $g_1$, $g_2$ and $g_3$. Speed ratios $r_{2,3}=\frac{l(p_2^3)}{l_2+l_3}$ and $r_{1,3}=\frac{l(p_1^3)}{l_1+l_3}$ are computed. $p_2^3$ and $p_1^3$ are shown as a magenta and a black trajectory in Figure \ref{fig:example} (in some part $p_1^3$ merges with $p_2^3$). Assume that $r_{1,2}$ is the smallest of them, so we use $f(1,2,r)$ to describe the intersection. It follows that $r= \argmin_r d(f(1,2,r), s_{ext}^j(3))$. In Figure \ref{fig:intersection}, we can see that $q \in t_2$, so $r_2=r$. The maximum speed ratio is $r_{max}= \min \{ r',r_1,r_2 \}$.

\section{Conclusions}
\label{sec:conclusion}

In this paper, a variation of the art gallery problem was addressed. A team of mobile guards with finite speed tries to maintain visibility of a set of unpredictable intruders in a simply connected polygonal environment. The guards are deployed and confined to move along diagonals of the polygon. 
In this work, we presented a strategy to determine if $\lfloor \frac{n}{4} \rfloor$ are sufficient to track all the intruders forever. The problem was modeled as an hybrid automaton, which definition requires the concept of that determine an appropriate coordination to cover each triangle according to the location of the intruders. We also presented a method to determine when additional guards are required to ensure tracking. 

A few of our future research directions are as follows. One of the ongoing efforts is to consider the specific case of orthogonal polygons, since many indoor environments for practical cases can be modeled as orthogonal polygons, so we can take advantage of some of their properties to give better results for such cases. To achieve this, one potential direction is to use the quadrilateralization of these polygons instead of the triangulation and to define cut guards instead of diagonal guards, so we can reduce the number of guards required to track an intruder or a set of intruders.

\addtolength{\textheight}{-12cm}   % This command serves to balance the column lengths
                                  % on the last page of the document manually. It shortens
                                  % the textheight of the last page by a suitable amount.
                                  % This command does not take effect until the next page
                                  % so it should come on the page before the last. Make
                                  % sure that you do not shorten the textheight too much.

%%%%%%%%%%%%%%%%%%%%%%%%%%%%%%%%%%%%%%%%%%%%%%%%%%%%%%%%%%%%%%%%%%%%%%%%%%%%%%%%

%%%%%%%%%%%%%%%%%%%%%%%%%%%%%%%%%%%%%%%%%%%%%%%%%%%%%%%%%%%%%%%%%%%%%%%%%%%%%%%%

%%%%%%%%%%%%%%%%%%%%%%%%%%%%%%%%%%%%%%%%%%%%%%%%%%%%%%%%%%%%%%%%%%%%%%%%%%%%%%%%
%\section*{APPENDIX}

%Appendixes should appear before the acknowledgment.

%\section*{ACKNOWLEDGMENT}

%The preferred spelling of the word ÒacknowledgmentÓ in America is without an ÒeÓ after the ÒgÓ. Avoid the stilted expression, ÒOne of us (R. B. G.) thanks . . .Ó  Instead, try ÒR. B. G. thanksÓ. Put sponsor acknowledgments in the unnumbered footnote on the first page.

%%%%%%%%%%%%%%%%%%%%%%%%%%%%%%%%%%%%%%%%%%%%%%%%%%%%%%%%%%%%%%%%%%%%%%%%%%%%%%%%

%References are important to the reader; therefore, each citation must be complete and correct. If at all possible, references should be commonly available publications.

\bibliographystyle{IEEEtran}
\bibliography{IEEEabrv,references3}

\end{document}